\newtheorem{proposition}{Proposition}
\newtheorem{definition}{Definition} 
\title{\LARGE \bf
Rigid Body Motion Prediction with Planar Non-convex Contact Patch
}
\author{Jiayin Xie$^{1}$ and Nilanjan Chakraborty$^{2}$
\thanks{$^{1}$Jiayin Xie is with the Department of Mechanical Engineering,
        Stony Brook University, 100 Nicolls Rd, Stony Brook, NY 11794
        {\tt\small jiayin.xie@stonybrook.edu}}%
\thanks{$^{2}$Nilanjan Chakraborty is with the Department of Mechanical Engineering, Stony Brook University, 100 Nicolls Rd, Stony Brook, NY 11794
        {\tt\small nilanjan.chakraborty@stonybrook.edu}}%
}
\begin{document}

\maketitle
\thispagestyle{empty}
\pagestyle{empty}

\begin{abstract}
We present a principled method for motion prediction via dynamic simulation for rigid bodies in intermittent contact with each other where the contact is assumed to be a planar non-convex contact patch. The planar non-convex contact patch can either be a topologically connected set or disconnected set. Such algorithms are useful in planning and control for robotic manipulation. Most work in rigid body dynamic simulation assume that the contact between objects is a point contact, which may not be valid in many applications. In this paper,  by using the convex hull of the contact patch, we build on our recent work on simulating rigid bodies with convex contact patches, for simulating motion of objects with planar non-convex contact patches. We formulate a discrete-time mixed complementarity problem where we solve the contact detection and integration of the equations of motion simultaneously. Thus, our method is a geometrically-implicit method and we prove that in our formulation, there is no artificial penetration between the contacting rigid bodies. We solve for the equivalent contact point (ECP) and contact impulse of each contact patch simultaneously along with the state, i.e., configuration and velocity of the objects. We provide empirical evidence to show that our method can seamlessly capture transition between different contact modes like patch contact to multiple or single point contact during simulation.

\end{abstract}


\section{INTRODUCTION}
Rigid body motion prediction via dynamic simulation is a key enabling technology in solving robotic manipulation with multi-fingered hands, vibratory plates, and parts feeder design~\cite{ReznikC98,SongTVP04,VoseUL09,BerardNAT10}.
Many robotic manipulation tasks, e.g., extrinsic manipulation, manipulation by vibrating plates, involve point and surface contacts between the rigid body that is being manipulated and a flat plane on which the body rests~\cite{ReznikC98,VoseUL09,DafleR+14}. Furthermore, the occurrence of multiple intermittent contacts makes the prediction of the motion more complicated. There are applications in which the contact between two objects may be over a patch that can be modeled as a non-convex set. For example, Figure~\ref{figure_Motivation} shows a robot manipulator manipulating a T-shaped bar where the contact between the ground and the bar is a planar non-convex set. Such situations may arise when a robot manipulator with a parallel jaw gripper is trying to reconfigure a heavy bar with support from the table, so that it does not have to support the full weight. State-of-the-art dynamic simulation algorithms that can be used to predict motions during planning, usually assume point contact between two objects (except~\cite{XieC16, XieC18a}), which is clearly violated in Figure~\ref{figure_Motivation}. There are no well-principled approaches to predict the effect of applying a force/torque on the bar. In this paper, we seek to develop principled algorithms for predicting motion of rigid bodies in intermittent contact (via dynamic simulation), where the contacts can be modeled as a planar non-convex set.

\begin{figure}[htb]
\includegraphics[width=0.45\textwidth]{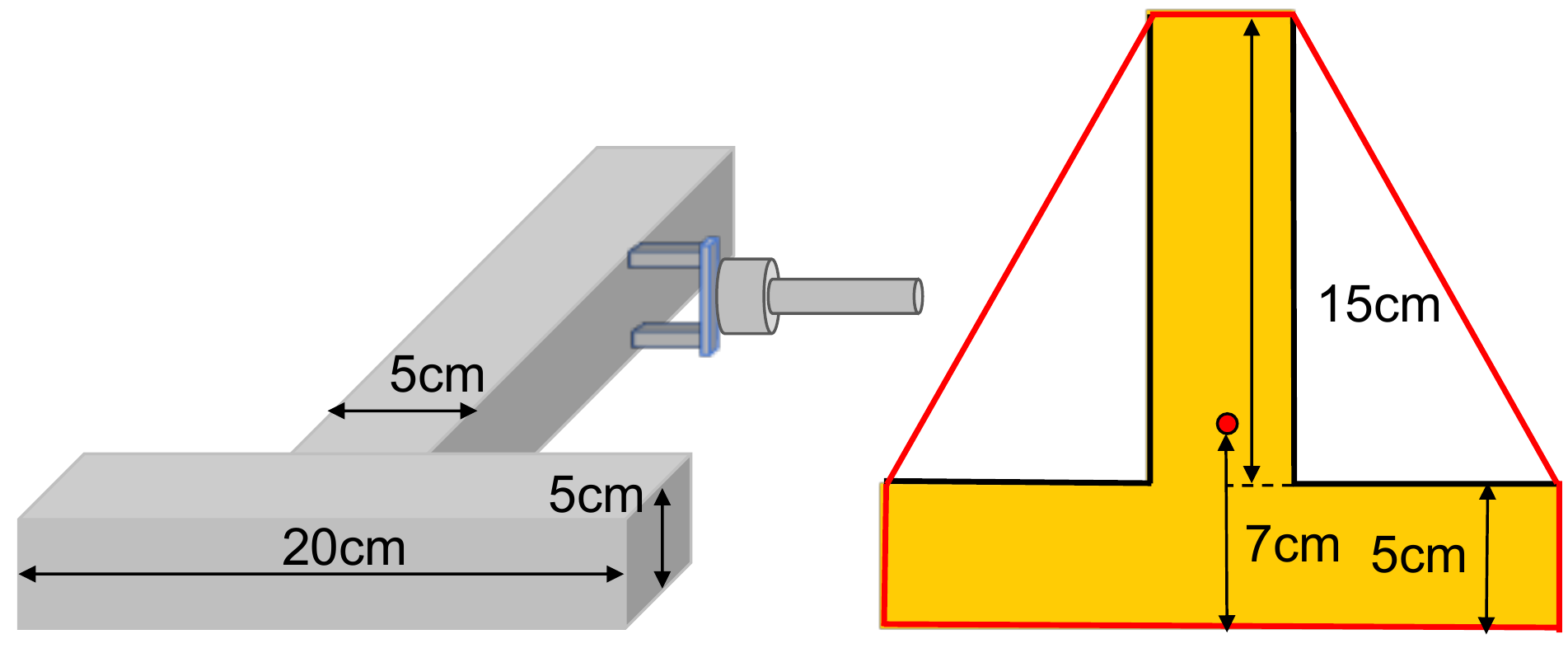}
\caption{(Left) A T-shaped bar on planar surface is manipulated by a gripper while being supported on the plane, (Right) where the planar contact between T bar and support is a non-convex T-shaped patch. The red line shows the convex hull for the contact patch.}
\label{figure_Motivation} 
\end{figure}

Figure~\ref{figure:different_contact_case} shows the key types of contact between objects. Most existing mathematical models for motion of objects with intermittent contact like Differential Algebraic Equation (DAE) models~\cite{Haug1986} and Differential Complementarity Problem (DCP) models~\cite{Cottle2009,Trinkle1997,PfeifferG08} assume the contact between the two objects is a single point contact (top left in Figure~\ref{figure:different_contact_case}). However, for convex contact patch (middle row in Figure~\ref{figure:different_contact_case}), the point contact assumption is not valid. In such case, multiple contacts point are usually chosen in an ad hoc manner, which can lead to inaccuracies in simulation (Please see~\cite{XieC16} for example scenarios). Recently, we developed an approach~\cite{XieC16} to simulate contacting rigid bodies with convex contact patches (line and surface contact). In~\cite{XieC18a}, we develop an approach for simulating contacting bodies where the contact patch is non-convex but can be modeled as a union of convex sets (bottom row, right column in Figure~\ref{figure:different_contact_case}).  In this paper, we focus on simulating bodies with planar non-convex contact patch, where the non-convex contact patch may not be a union of convex sets. The contact can be multiple point contacts or a general planar non-convex patch contact (top row, right column and bottom row in Figure~\ref{figure:different_contact_case}). Such situations arise when a robot is manipulating objects supported by a horizontal plane.
\begin{figure}[h]
\centering
\includegraphics[width=0.4\textwidth]{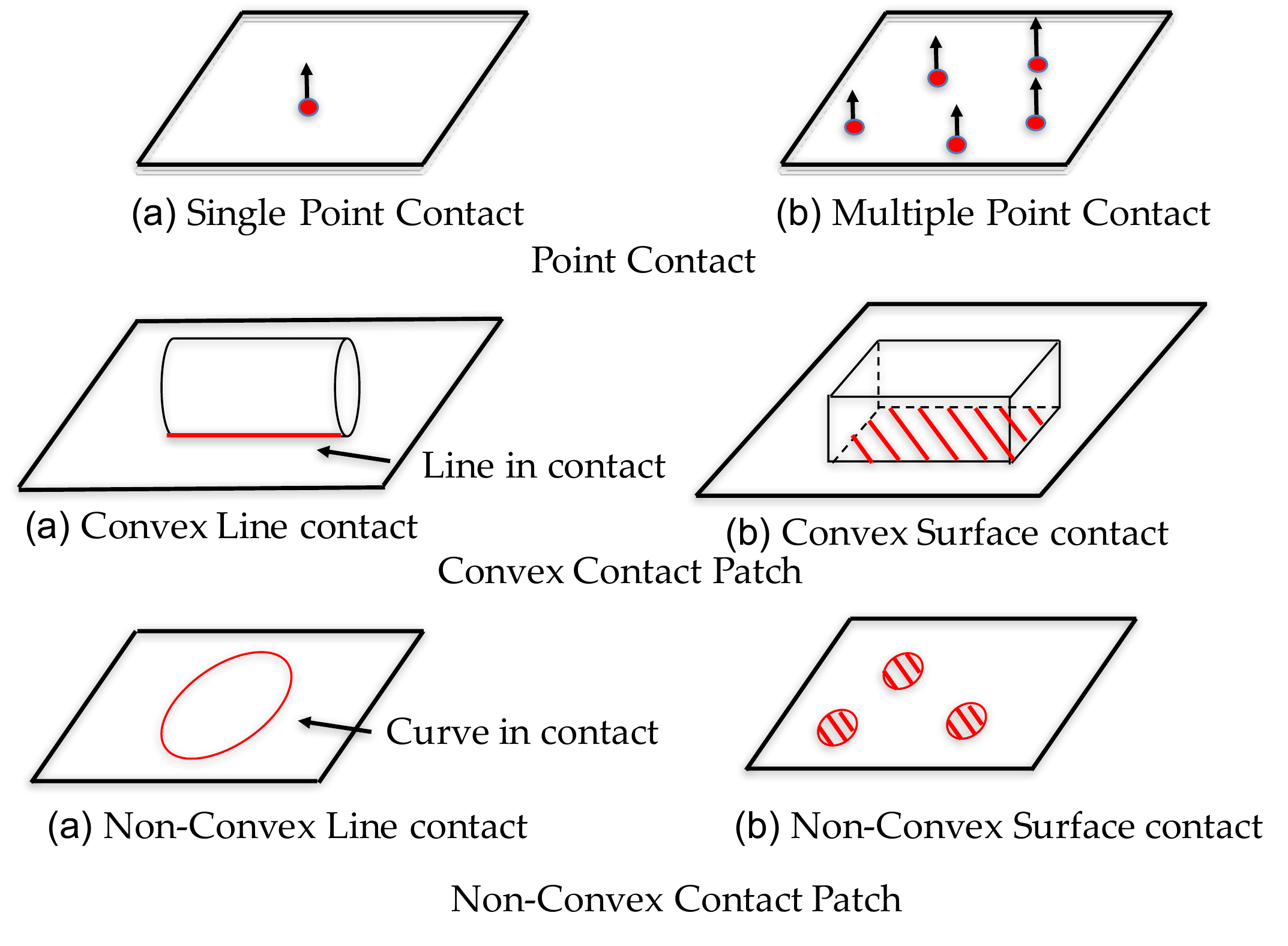}
\caption{Different types of contact between one object with a flat surface. Our focus in this paper is on simulating rigid bodies with type of contact shown in last row and first row, pane (b).}
\label{figure:different_contact_case} 
\end{figure} 
For a single convex contact patch, we know that there exists a unique point on the contact surface where the integral of total moment due to normal force acting on this point is zero. This point is used to model line or surface contact as a point contact and thus it is called the {\em equivalent contact point} (ECP)~\cite{XieC16}. Using the concept of ECP, in~\cite{XieC16}, we present a principled method for simulating intermittent contact with convex contact patches (line and surface contact). This method solves for the ECP as well as the contact impulses by {\em incorporating  the collision detection within the dynamic simulation time step}. This method is called the {\em geometrically implicit time-stepping method} because the geometric information of contact points and contact normal are solved as a part of the numerical integration procedure. In~\cite{XieC18a}, for non-convex contact patches that can be modeled as a union of convex sets, we use an ECP to model the effect of each convex contact patch and solve for the ECP and its associated contact wrenches on each contact patch separately. However, the limitation of this method was that the force/moment distribution and the ECP was non-unique, although the state of the object was unique. Furthermore, if there are more than three convex sets forming the non-convex patch, the force/moment in some of the contact patches may become zero.

In this paper, we extend the method in~\cite{XieC16}, by using the convex hull of the contact patch for modeling the contact constraints in the equations of motion. Although, we have intermittent contact and the contact patch may change (even topologically, we can go from a connected non-convex patch to multiple point contact), we do not need to form the convex hull of the contact patch during the simulation depending on the contact mode. Instead, we use the convex hull of the non-convex object that is being manipulated. And since we solve the collision detection problem simultaneously with the equations of motion (i.e., our method is geometrically implicit), we can ensure that the {\em convex hull of the contact patch} will always be automatically obtained through our contact detection constraints. Note that distinct from~\cite{XieC16}, the ECP may not be a point within the physical contact region (but it will be a point within the convex hull of the contact region). We prove that even though we are modeling a non-convex contact patch with an equivalent contact point that may not lie within the patch, the contact constraints are always satisfied at the end of the time-step and there is no artificial penetration between the objects. We show simulation results validating our approach with our previous models~\cite{XieC18a,XieC18b}. We also present simulation results showing that the object can seamlessly transition among different contact modes like non-convex patch contact, multiple point contact, line contact, and single point contact. 

\section{RELATED WORK}
In this section, we present the related work in rigid body dynamic simulation with a focus on methods for dealing with intermittent contact. There is also a substantial body of work on development of discretization schemes for integrating and simulating rigid body motion that we do not discuss here (please see the literature on variational integrators~\cite{MarsdenJM01,JohnsonRM09,KobilarovCD09} and references therein). 
We model the continuous time dynamics of rigid bodies that are in intermittent contact with each other as a Differential Complementarity Problem (DCP).  
Let $\bm{u}\in \mathbb{R}^{n_1}$,  $\bm{v}\in \mathbb{R}^{n_2}$ and let $\bm{g}$ :$ \mathbb{R}^{n_1}\times \mathbb{R}^{n_2} \rightarrow \mathbb{R}^{n_1} $, $\bm{f}$ : $ \mathbb{R}^{n_1}\times \mathbb{R}^{n_2} \rightarrow \mathbb{R}^{n_2}$ be two vector functions and the notation $0 \le \bm{x} \perp \bm{y} \ge 0$ imply that $\bm{x}$ is orthogonal to $\bm{y}$ and each component of the vectors is non-negative. 
\begin{definition}
The differential (or dynamic) complementarity problem~\cite{Facchinei2007} is to find $\bm{u}$ and $\bm{v}$ satisfying
$$\dot{\bm{u}} = \bm{g}(\bm{u},\bm{v}), \ \ \ 0\le \bm{v} \perp \bm{f}(\bm{u},\bm{v}) \ge 0 $$
\end{definition}
\begin{definition}
The mixed complementarity problem is to find $u$ and $v$ satisfying
$$\bm{g}(\bm{u},\bm{v})=0, \ \ \ 0\le \bm{v} \perp \bm{f}(\bm{u},\bm{v}) \ge 0.$$
If the functions $\bm{f}$ and $\bm{g}$ are linear, the problem is called a mixed linear complementarity problem (MLCP), otherwise, the problem is called a mixed nonlinear complementarity problem (MNCP). Our continuous time dynamics model is a DCP whereas our discrete-time dynamics model is a MNCP. 
\end{definition}

The DCP model formulates the intermittent contact between bodies in motion as a complementarity constraint~\cite{Lotstedt82, AnitescuCP96, Pang1996, StewartT96, Liu2005, DrumwrightS12, Todorov14}. 
DCP models are solved numerically with time-stepping schemes. The time-stepping problem is: {\em given the state of the system and applied forces, compute an approximation of the system one time step into the future. } Solving this problem repeatedly will give an approximate solution to the equations of motion.

There are different assumptions for forming the discrete equations of motion, which makes the system Mixed Linear Complementarity problem (MLCP)~\cite{AnitescuP97, AnitescuP02} or mixed non-linear complementarity problem (MNCP)~\cite{Tzitzouris01,NilanjanChakraborty2007}. The MLCP problem linearizes the friction cone constraints and the distance function between two bodies (which is a nonlinear function of the configuration), sacrificing accuracy for speed. Depending on whether the distance function is approximated, the time-stepping schemes can also be divided into geometrically explicit schemes~\cite{AnitescuCP96, StewartT96} and geometrically implicit schemes~\cite{Tzitzouris01}. 

In geometrically explicit schemes, at the current state, a collision detection routine is called to determine separation or penetration distances between the  bodies, but this information is not incorporated as a function of the unknown future state at the end of the current time step. A goal of a typical time-stepping scheme is to guarantee consistency of the dynamic equations and all model constraints at the end of each time step. However, since the geometric information is obtained and approximated only at the start of the current time-step, then the solution will be in error. Apart from being geometrically explicit, most of the existing complementarity-based dynamic simulation methods and software also assume point contact between objects~\cite{CouBullet,SmithODE,TodorovET2012,TasoraS+2015,LeeG+2018,BerardT+2007}. A patch contact is usually approximated by ad hoc choice of $3$ contact points on the contact patch. In~\cite{XieC16}, we compared our non-point contact model with two popular point-based models, namely, Open Dynamic Engine (ODE)~\cite{SmithODE} and Bullet~\cite{CouBullet} in a pure translation task with a square contact patch where the analytic closed-form solution is known. We showed that our results matched the theoretical results, and was more accurate compared to ODE and Bullet. Thus, in~\cite{XieC16,NilanjanChakraborty2007}, we used a geometrically implicit time stepping scheme for solving convex contact patches problem, which is also the method used in this paper. The resulting discrete time problem is a MNCP.

\section{DYNAMIC MODEL FOR RIGID BODY SYSTEMS}
In complementarity methods, the dynamic simulation of intermittent unilateral contact between two rigid objects can be modeled by a geometrically implicit optimization-based time-stepping scheme. Note that the contact between objects is a planar contact patch, which can be either convex or non-convex. The dynamic model is made up of the following parts: (a) Newton-Euler equations (b) kinematic map relating the generalized velocities to the linear and angular velocities (c) friction law and (d) non-penetration constraints.  
The parts (a), (b) form a system of ordinary differential equations and they are standard for any complementarity-based formulation. Part (c) can be written as a system of complementarity constraints, which is based on Coulomb friction law using the maximum work dissipation principle. Part (d) incorporates the geometry of contact set as  system of complementarity constraint~\cite{NilanjanChakraborty2007,XieC16,XieC18a}.

To describe the dynamic model mathematically, we will introduce some notation first. Let $\bm{q}$ be the position of the center of mass of the object and the orientation of the object ($\bf{q}$ can be $6 \times 1$ or $7\times 1$ vector depending on the representation of the orientation). We will use unit quaternion to represent the orientation unless otherwise stated. The generalized velocity $\bm{\nu}$ is the concatenated vector of linear ($\bm{v}$) and spatial angular ($^s\bm{\omega}$) velocities. The effect of the contact patch is modeled as point contact of equivalent contact points (ECPs) $\bm{a}_1$ or $\bm{a}_2$ on two objects. Let
$\lambda_n$ be the magnitude of normal contact force,
$\lambda_t$ and $\lambda_o$ be the orthogonal components of the friction force on the tangential plane, and
$\lambda_r$ be the frictional moment about the contact normal.
\subsection{Newton-Euler equations of motion}
\begin{equation} 
\begin{aligned}
\label{eq1}
\bm{M}(\bm{q})
{\dot{\bm{\nu}}} &= 
\bm{W}_{n}\lambda_{n}+
\bm{W}_{t} \lambda_{t}+
\bm{W}_{o} \lambda_{o}
\\&+\bm{W}_{r}\lambda_{r}+
\bm{\lambda}_{app}+\bm{\lambda}_{vp}
\end{aligned}
\end{equation}
where $\bm{M}(\bm{q}) = \left[ \begin{matrix} &m \bm{I}_3 \ &0\\ &0 \ &{^s\mathcal{I}}_{cm}\end{matrix}\right]$ is a symmetric, positive definite $6 \times 6$ matrix, which contains mass matrix $m \bm{I}_3$ ($\bm{I}_3$ is a $3 \times 3$ identity matrix) and  inertia matrix ${^s\mathcal{I}}_{cm} = \bm{R} \mathcal{I}_{cm} \bm{R}^T$. Here $\bm{R}$ is the $3 \times 3$ rotation matrix from body frame to world frame and $\mathcal{I}_{cm}$ is the inertia matrix in the body frame. $\bm{\lambda}_{app}$ is the  $6 \times 1$ vector of external forces (including gravity) and moments, $\bm{\lambda}_{vp}$ is the $6 \times 1$ vector of Coriolis and centripetal forces. $\bm{W}_{n}$, $\bm{W}_{t}$, $\bm{W}_{o}$ and $
\bm {W}_{r}$ are dependent on configuration $\bm{q}$ and ECP ($\bm{a}_{1}$ or $\bm{a}_{2}$), and map the normal contact forces, frictional forces and moments to the world reference frame:
\begin{equation}
\begin{aligned}
\label{equation:wrenches}
\bm{W}_{n} =  \left [ \begin{matrix} 
\bm{n}\\
\bm{r}\times \bm{n}
\end{matrix}\right]
\quad 
\bm{W}_{t} =  \left [ \begin{matrix} 
\bm{t}\\
\bm{r}\times \bm{t}
\end{matrix}\right]
\\
\bm{W}_{o} =  \left [ \begin{matrix} 
\bm{o}\\
\bm{r}\times \bm{o}
\end{matrix}\right]
\quad 
\bm{W}_{r} =  \left [ \begin{matrix} 
\ \bm{0}\\
\ \bm{n} 
\end{matrix}\right]
\end{aligned}
\end{equation}
where $(\bm{n},\bm{t},\bm{o})$ are unit vectors of contact frame and $\bm{r}$ is the vector from center of gravity to the ECP, in the world frame.
\subsection{Kinematic map}
\begin{equation}
\bm{\dot{q}} = \bm{G}(\bm{q}) \bm{\nu}
\end{equation}
where matrix $\bm{G}$ maps the generalized velocity $\bm{\nu}$ to the time derivative of the position and orientation $\bm{\dot{q}}$.
\section{Modeling Planar Non-convex Patch Contact}
In this section, we will present our method for modeling a planar non-convex contact patch. Although, we will present the equations here in a more general manner, for concreteness, one can think  that one object is a non-convex object and the other object is a plane (or a face of a polyhedron). This is the scenario where planar non-convex contact patch is easy to visualize and this situation is quite prevalent in robotics.
Let $F$ and $G$ be the two objects, where, without loss of generality, the object $F$ is the non-convex object.
When two objects $F$ and $G$ have planar contact, the planar contact patch $\mathcal{S}$ is a non-empty finite subset of line or  plane. We will use the convex hull of object $F$, denoted by $Conv(F)$ to model the non-convex object $F$ (this will be justified later in the section). We will now present the contact constraints for non-penetration of rigid bodies.

\subsection{Non-penetration constraints}
In complementarity-based formulation of dynamics, the contact constraint for a potential contact is written as 
\begin{equation} \label{equation:normal contact}
0\le \lambda_{n} \perp \psi_{n}(
\bm{q},t) \ge 0
\end{equation}
where $\psi_{n}(\bm{q},t)$ is the gap function for the contact with the property $\psi_{n}(\bm{q},t) > 0$ for separation, $\psi_{n}(\bm{q},t) = 0$ for touching and $\psi_{n}(\bm{q},t) < 0$ for interpenetration. Note that there is usually no closed form expression for $\psi_{n}(\bm{q},t)$. Thus, usually, a call is made to a collision detection module that provides information on the distance function and a first order approximation of the above equation is usually used in the discrete-time formulation of equations of motion, which can lead to inaccuracies in motion prediction~\cite{NilanjanChakraborty2007}.

In~\cite{NilanjanChakraborty2007}, we presented a method for incorporating the geometry of the contacting objects so that, we make sure that Equation~\eqref{equation:normal contact} is satisfied exactly at the end of the time step and the contact points at the end of the time step are obtained. In~\cite{XieC16}, we showed that this method actually computes the ECP when the contact patch is a convex contact patch. We will now show that the contact constraints presented below allows us to compute the ECP of a non-convex contact patch as part of the integration of the equations of motion.

We assume that the convex hull of $F$, i.e., $Conv(F)$, and $G$ are
described by the intersecting convex inequalities $f_{i}(\bm{x}) \le 0, i = 1,...,m$, and $g_{j}(\bm{x}) \le 0, j = m+1,...,n$ respectively. Note that each individual convex constraint $f_{i}(\bm{x}) = 0$  describes the boundary of the convex hull. Note that multi-point contact, single point contact and convex patch contact are all special cases of the contact that we are considering. Let $\bm{a}_1$ and $\bm{a}_2$ be the pair of equivalent contact points for $Conv(F)$ and $G$ respectively. Note that, in general, $\bm{a}_1$ may not be a point in $F$.

We rewrite the contact condition (Equation~\eqref{equation:normal contact}) as a complementarity condition, and combine it with an optimization problem to find the closest points. Note that when objects are separate, the equivalent contact points $\bm{a}_{1}$ and $\bm{a}_{2}$ are solved as pair of closest points on the convex hull of $F$ and $G$. However, this does not lead to any inaccuracies since separation of $Conv(F)$ from the plane $G$ implies separation of $F$ and $G$ and vice-versa.  When objects have contact, $\bm{a}_{1}$ and $\bm{a}_{2}$ are solved as touching solution which prevents penetration between objects.

The convex inequality has the property that for any point $\bm{x}$, the point lies inside the object when $f(\bm{x}) < 0$, on the boundary of object when $f(\bm{x}) = 0$, and outside the object when $f(\bm{x}) > 0$. Thus, the contact condition (Equation~\eqref{equation:normal contact}) can be rewritten as either one of the following two complementarity constraints~\cite{NilanjanChakraborty2007}:
\begin{align}
\label{equation:contact_multiple_comp_1}
&0 \le \lambda_{n} \perp \mathop{max}_{i=1,...,m} f_{i}(\bm{a}_{2}) \ge 0\\
\label{equation:contact_multiple_comp_2}
&0 \le \lambda_{n} \perp \mathop{max}_{j=m+1,...,n}g_{j}(\bm{a}_{1}) \ge 0
\end{align}
where $\bm{a}_{1}$ and $\bm{a}_{2}$ are given by a solution to the following minimization problem:
\begin{equation}
\label{equation:optimazation}
(\bm{a}_{1},\bm{a}_{2}) = \arg \min_{\bm{\zeta}_1,\bm{\zeta}_2}\{ \|\bm{\zeta}_1-\bm{\zeta}_2 \| : \ f_{i}(\bm{\zeta}_1) \le 0,\ g_{j}(\bm{\zeta}_2) \le 0 \}
\end{equation}

As shown in~\cite{NilanjanChakraborty2007}, based on a modification of the KKT conditions, we can show that the ECPs need to satisfy the algebraic and complementarity constraints given below to solve the optimization problem above (Equation~\eqref{equation:optimazation}).
\begin{align}
\label{equation:re_contact_multiple_1}
&\bm{a}_{1}-\bm{a}_{2} = -l_{k}\nabla\mathcal{C}(\bm{F}_i,\bm{a}_{1})\\
\label{equation:re_contact_multiple_2}
&\nabla\mathcal{C}(\bm{F}_i,\bm{a}_{1})= -\sum_{j = m+1}^{n} l_{j} \nabla g_{j} (\bm{a}_{2})\\
\label{equation:re_contact_multiple_3}
&0 \le l_{i} \perp -f_{i}(\bm{a}_{1}) \ge 0 \quad i = 1,..,m,\\
\label{equation:re_contact_multiple_4}
&0 \le l_{j} \perp -g_{j}(\bm{a}_{2}) \ge 0 \quad j = m+1,...,n.
\end{align}
where $\nabla\mathcal{C}(\bm{F}_i,\bm{a}_{1}) = \nabla f_{k}(\bm{a}_{1})+\sum_{i\neq k}^{m} l_{i}\nabla f_{i}(\bm{a}_{1})$, $k$ represents the index of any one of the active constraints (i.e., the surface on which the ECP $\bm{a}_{1}, \bm{a}_{2}$ lies). We will also need an additional complementarity constraint (either Equation~\eqref{equation:contact_multiple_comp_1} or Equation~\eqref{equation:contact_multiple_comp_2}) to prevent penetration:
\begin{align}
\label{equation:re_contact_multiple_5}
0 \le \lambda_{n} \perp \mathop{max}_{j=m+1,...,n} g_{j}(\bm{a}_{1}) \ge 0
\end{align}
Equations~\eqref{equation:re_contact_multiple_1}$\sim$~\eqref{equation:re_contact_multiple_5} together gives the constraints that the equivalent contact points $\bm{a}_{1}$ and $\bm{a}_{2}$ for should satisfy for ensuring no penetration between the objects. We prove this formally in Proposition $2$. We first prove that the use of the convex hull ensures that the ECP that we compute is within the convex hull of the contact patch. 
\begin{proposition}
By using the convex hull of the object $F$ to formulate the contact constraints, we ensure that we compute the ECP within the convex hull of the contact patch.
\end{proposition}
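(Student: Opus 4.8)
The plan is to make the geometric content explicit: identify the contact patch with the intersection of the non-convex object $F$ and the supporting plane, and then show that the face of $Conv(F)$ that the optimization in Equation~\eqref{equation:optimazation} selects at touching is exactly the convex hull of that patch. At a contact configuration the plane $G$ supports $F$, so there is a unit normal $\bm{n}$ and a scalar $c$ with $G$ lying in the plane $\{\bm{x}:\bm{n}\cdot\bm{x}=c\}$ and $F\subseteq\{\bm{x}:\bm{n}\cdot\bm{x}\ge c\}$. The contact patch is then $\mathcal{S}=\{\bm{x}\in F:\bm{n}\cdot\bm{x}=c\}$, a possibly non-convex subset of the plane, and the object of interest is $Conv(\mathcal{S})$. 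First I would note that the very same half-space contains $Conv(F)$, because a linear inequality that holds on every point of $F$ holds on every convex combination of points of $F$; hence $Conv(F)\subseteq\{\bm{x}:\bm{n}\cdot\bm{x}\ge c\}$ as well.

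The key step is the lemma that the contact face $\Omega:=\{\bm{x}\in Conv(F):\bm{n}\cdot\bm{x}=c\}$ equals $Conv(\mathcal{S})$. The inclusion $Conv(\mathcal{S})\subseteq\Omega$ is immediate, since $\mathcal{S}\subseteq F\subseteq Conv(F)$ and $\mathcal{S}$ lies on the plane, so every convex combination of points of $\mathcal{S}$ remains in $Conv(F)$ and on the plane. For the reverse inclusion I would take $\bm{x}\in\Omega$ and write $\bm{x}=\sum_i\theta_i\bm{p}_i$ with $\bm{p}_i\in F$, $\theta_i\ge 0$, and $\sum_i\theta_i=1$. Then $c=\bm{n}\cdot\bm{x}=\sum_i\theta_i(\bm{n}\cdot\bm{p}_i)$, and because each summand obeys $\bm{n}\cdot\bm{p}_i\ge c$ while the weights sum to one, every $\bm{p}_i$ with $\theta_i>0$ must in fact satisfy $\bm{n}\cdot\bm{p}_i=c$, i.e.\ $\bm{p}_i\in\mathcal{S}$. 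Thus $\bm{x}$ is a convex combination of points of $\mathcal{S}$, giving $\Omega\subseteq Conv(\mathcal{S})$ and hence $\Omega=Conv(\mathcal{S})$.

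Finally I would connect this to the quantity actually computed. When $F$ and $G$ are in contact, the minimum value of $\|\bm{\zeta}_1-\bm{\zeta}_2\|$ in Equation~\eqref{equation:optimazation} is zero and is attained by a pair with $\bm{a}_1=\bm{a}_2$ lying on the common plane; equivalently $\bm{a}_1\in Conv(F)$ with $\bm{n}\cdot\bm{a}_1=c$, so $\bm{a}_1\in\Omega=Conv(\mathcal{S})$, which is precisely the convex hull of the contact patch. (When the objects are separated there is no contact patch and nothing to prove, since $\bm{a}_1,\bm{a}_2$ are then merely the closest points of $Conv(F)$ and $G$, and separation of $Conv(F)$ from $G$ is equivalent to separation of $F$ from $G$.) The hard part will be the lemma in the second paragraph: one must justify the supporting-hyperplane description of the touching configuration and, crucially, use the \emph{global} inequality $\bm{n}\cdot\bm{x}\ge c$ over all of $F$ rather than merely on its boundary, since that is what forces the active generators $\bm{p}_i$ to lie in $\mathcal{S}$. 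Once $\Omega=Conv(\mathcal{S})$ is established, identifying $\bm{a}_1$ with a point of $\Omega$ follows directly from the touching condition $\|\bm{a}_1-\bm{a}_2\|=0$, regardless of any non-uniqueness in the ECP itself.
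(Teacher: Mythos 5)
Your proposal is correct, and it takes a genuinely different route from the paper. The paper only gives a heuristic sketch via extreme points: $Conv(F)$ contains all extreme points of $F$, only such points can ``potentially'' touch the plane, so the active constraints of the convex-hull description carve out the convex hull of the contacting points. That sketch is intuitive but loose --- when contact occurs over a flat face, interior points of the face are contact points yet are not extreme points of $F$, so the claim that extreme points are the only candidates needs repair. Your argument fixes exactly this: you replace the extreme-point heuristic with the supporting-hyperplane face lemma $\{\bm{x}\in Conv(F):\bm{n}\cdot\bm{x}=c\}=Conv(\{\bm{x}\in F:\bm{n}\cdot\bm{x}=c\})$, proved by the standard averaging argument (a convex combination of points satisfying $\bm{n}\cdot\bm{p}_i\ge c$ can hit the bound $c$ only if every active generator does), and then you identify the computed point with a point of this face via the touching condition $\bm{a}_1=\bm{a}_2$ of the optimization in Equation~\eqref{equation:optimazation}. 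What your route buys is a complete, airtight proof that uniformly covers the degenerate contact modes the paper cares about (multi-point, line, and patch contact), since the face lemma makes no case distinctions; what the paper's sketch buys is brevity and the intuition that passing to $Conv(F)$ discards no contact candidates. Two dependencies in your argument are worth flagging explicitly, though both are consistent with the paper's setting: the half-space condition $F\subseteq\{\bm{x}:\bm{n}\cdot\bm{x}\ge c\}$ at a touching configuration is not automatic but is exactly what the complementarity constraints guarantee (the paper's Proposition~2), and your identification of the contact face relies on $G$ being a plane or a face of a polyhedron, which is the scenario the paper restricts to when it fixes $G$ as the supporting plane.
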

\begin{proof}
Due to lack of space, we present a sketch of the proof idea here.
The convex hull contains the set of all the extreme points\footnote{The extreme point of a set is a point satisfying the following property: There exists a hyperplane passing through the point such that all points in the set lies on one side of the hyperplane.} of the object.  For a non-convex object contacting with a plane, the set of extreme points are the only points that can potentially contact the plane. Therefore, using the convex hull description ensures that we are capturing the set of all points that can be in contact. So when we are solving for the ECP, it will be in the convex region defined by the active constraints which is essentially the convex hull of the set of contacting points. 
\end{proof}

\begin{proposition}{When using Equations~\eqref{equation:re_contact_multiple_1} to~\eqref{equation:re_contact_multiple_5} to model the contact between convex hulls for two objects, we get the solution for ECPs as the closest points on the boundary of convex hulls respectively when objects are separate. When objects have planar contact, we will get touching solution which prevents penetration. }
\end{proposition}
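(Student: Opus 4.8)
The plan is to treat the two regimes --- separation and contact --- separately, exploiting the fact that Equations~\eqref{equation:re_contact_multiple_1}--\eqref{equation:re_contact_multiple_4} are precisely the (modified) KKT conditions of the convex program~\eqref{equation:optimazation}, while Equation~\eqref{equation:re_contact_multiple_5} is the extra constraint that rules out penetration. First I would note that~\eqref{equation:optimazation} is convex: the objective $\|\bm{\zeta}_1-\bm{\zeta}_2\|$ is convex and the feasible set is the product of $Conv(F)$ and $G$, each an intersection of convex inequalities. Hence the KKT conditions are necessary \emph{and} sufficient for a global minimizer, so any $(\bm{a}_1,\bm{a}_2)$ satisfying~\eqref{equation:re_contact_multiple_1}--\eqref{equation:re_contact_multiple_4} is a pair of globally closest points. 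Equations~\eqref{equation:re_contact_multiple_1}--\eqref{equation:re_contact_multiple_2} encode the stationarity/normal-alignment condition (the segment $\bm{a}_1-\bm{a}_2$ is anti-parallel to the outward normal $\nabla\mathcal{C}$ of $Conv(F)$ at $\bm{a}_1$, which is itself anti-parallel to the active outward normals $\nabla g_j$ of $G$ at $\bm{a}_2$, since $l_k,l_j\ge 0$), while~\eqref{equation:re_contact_multiple_3}--\eqref{equation:re_contact_multiple_4} give primal feasibility and complementary slackness, so $\bm{a}_1\in Conv(F)$, $\bm{a}_2\in G$, with multipliers active only on the faces through the closest points.

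For the separation regime I would argue that the minimum distance is strictly positive, so by standard convex analysis the closest points lie on the respective boundaries with the connecting segment along the common normal --- exactly the content of~\eqref{equation:re_contact_multiple_1}--\eqref{equation:re_contact_multiple_2}. It then remains to check consistency with~\eqref{equation:re_contact_multiple_5}: since $\bm{a}_1\in\partial\,Conv(F)$ lies strictly outside $G$, the sign property of the convex inequalities gives $\max_j g_j(\bm{a}_1)>0$, so complementarity forces $\lambda_n=0$; that is, no contact impulse is generated while the bodies are apart, as required.

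For the contact regime the minimum distance is zero, so $\bm{a}_1=\bm{a}_2$, and here lies the main obstacle: problem~\eqref{equation:optimazation} alone is degenerate, because its optimal value is $0$ not only when the hulls touch but also when they overlap. In the overlapping case $\bm{a}_1=\bm{a}_2$ could be any common (possibly interior) point, so~\eqref{equation:re_contact_multiple_1}--\eqref{equation:re_contact_multiple_4} do not by themselves forbid penetration. This is where~\eqref{equation:re_contact_multiple_5} does the work: from~\eqref{equation:re_contact_multiple_3} we have $\bm{a}_1\in Conv(F)$ and from~\eqref{equation:re_contact_multiple_4} we have $\bm{a}_2=\bm{a}_1\in G$, while~\eqref{equation:re_contact_multiple_5} requires $\max_j g_j(\bm{a}_1)\ge 0$, i.e.\ $\bm{a}_1\notin\mathrm{int}(G)$. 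Combining the three yields $\bm{a}_1=\bm{a}_2\in\partial\,Conv(F)\cap\partial G$, a genuine touching point on both boundaries, so no interpenetration occurs. I would close by observing that whenever a nonzero normal impulse is active ($\lambda_n>0$), complementarity in~\eqref{equation:re_contact_multiple_5} sharpens this to $\max_j g_j(\bm{a}_1)=0$, confirming the touching (hence non-penetrating) solution.

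The remaining care is in handling degenerate active sets at vertices or edges, where several $f_i$ or $g_j$ are simultaneously active. There I would verify that the convex combination of gradients appearing in $\nabla\mathcal{C}(\bm{F}_i,\bm{a}_1)$ still produces a valid supporting normal (a nonzero element of the normal cone), so that the normal-alignment argument of the first paragraph, and with it the characterization of closest points, continues to hold. I expect this degenerate-active-set verification, together with establishing that the index-$k$ modification of the standard KKT stationarity condition is equivalent to the usual form, to be the most technical part of the argument.
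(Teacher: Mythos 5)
The paper itself gives no self-contained proof of this proposition --- it defers entirely to \cite{NilanjanChakraborty2007} and \cite{XieC16} with the remark that one substitutes $Conv(F)$ for $F$ --- so your reconstruction must stand on its own; your overall strategy (read \eqref{equation:re_contact_multiple_1}--\eqref{equation:re_contact_multiple_4} as the KKT system of the convex closest-point program \eqref{equation:optimazation}, then let \eqref{equation:re_contact_multiple_5} do the anti-penetration work) is indeed the route those references take, and your separation-regime argument is sound.

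However, there is a genuine gap at the crux of the contact regime. From $\bm{a}_1\in Conv(F)$, $\bm{a}_2=\bm{a}_1\in G$, and $\max_j g_j(\bm{a}_1)\ge 0$ you conclude ``$\bm{a}_1=\bm{a}_2\in\partial\,Conv(F)\cap\partial G$, a genuine touching point on both boundaries, so no interpenetration occurs.'' Two problems. First, \eqref{equation:re_contact_multiple_3}--\eqref{equation:re_contact_multiple_5} alone do not place $\bm{a}_1$ on $\partial\,Conv(F)$: a point on $\partial G$ lying in the \emph{interior} of $Conv(F)$ satisfies all three; what forces boundary membership is the paper's convention that $k$ indexes an \emph{active} constraint (so $f_k(\bm{a}_1)=0$), equivalently that $\nabla\mathcal{C}(\bm{F}_i,\bm{a}_1)$ is a nonzero outward normal --- a condition you never invoke here. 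Second, and more seriously, even $\bm{a}_1=\bm{a}_2\in\partial\,Conv(F)\cap\partial G$ does not imply non-penetration: two \emph{overlapping} convex bodies generically share boundary points (the crossing points of their boundaries), so common boundary membership is perfectly consistent with interpenetration, and your stated inference is a non sequitur. What actually excludes overlap is the alignment condition \eqref{equation:re_contact_multiple_2}, which you derive in your first paragraph but then do not use in the contact case: at a solution, the nonzero outward normal $\nabla\mathcal{C}(\bm{F}_i,\bm{a}_1)$ of $Conv(F)$ equals $-\sum_j l_j\nabla g_j(\bm{a}_1)$ with $l_j\ge 0$, so the hyperplane through $\bm{a}_1=\bm{a}_2$ orthogonal to $\nabla\mathcal{C}$ supports $Conv(F)$ on one side and $G$ on the other, whence $\mathrm{int}\,Conv(F)\cap\mathrm{int}\,G=\emptyset$ and the contact is genuine touching. (At a boundary-crossing point of overlapping bodies the outward normals are not anti-parallel --- e.g., a disk dipping below a half-plane --- so \eqref{equation:re_contact_multiple_2} is violated there, confirming the system admits no such spurious solution.) With this supporting-hyperplane step inserted, together with the verification you already flag (that the index-$k$ normalization matches standard KKT away from the degenerate all-zero-multiplier case, which also handles the nondifferentiability of $\|\bm{\zeta}_1-\bm{\zeta}_2\|$ at coincident points), your argument closes.
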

\begin{proof}
Because of lack of space, we do not provide the full proof here. The proof essentially follows from the arguments of the proof shown in~\cite{NilanjanChakraborty2007} and~\cite{XieC16}, with minor modification to consider the convex hull of $F$ instead of $F$.
\end{proof}

\subsection{Friction Model}
Our friction model is based on the maximum power dissipation principle and generalized Coulomb's friction law.

The effect of the patch can be modeled as point contact based on the ECP $\bm{a}_1$ or $\bm{a}_2$:
\begin{equation}
\begin{aligned}
{\rm max} \quad -(v_t \lambda_t + v_o\lambda_o + v_r \lambda_r)\\
{\rm s.t.} \quad \left(\frac{\lambda_t}{e_t}\right)^2 + \left(\frac{\lambda_o}{e_o}\right)^2+\left(\frac{\lambda_r}{e_r}\right)^2 - \mu^2 \lambda_n^2 \le 0
\end{aligned}
\end{equation}
where $v_t$ and $v_o$ are the tangential components of the relative velocity at ECP of the contact patch, $v_r$ is the relative angular velocity about the normal at ECP. $e_t,e_o$ and $e_r$ is the given positive constants defining the friction ellipsoid and $\mu$ represents the coefficient of friction at the contact \cite{Howe1996, Trinkle1997}. This constraint is the elliptic dry friction condition suggested in \cite{Howe1996} based upon evidence from a series of contact experiments. This model states that among all the possible contact forces and moments that lie within the friction ellipsoid, the forces and moment that maximize the power dissipation at the contact (due to friction) are selected.

This argmax formulation of the friction law has a useful alternative formulation~\cite{trinkle2001dynamic}
 \begin{equation}
\begin{aligned}
\label{equation:friction}
0&=
e^{2}_{t}\mu \lambda_{n} 
\bm{W}^{T}_{t}\cdot
\bm{\nu}+
\lambda_{t}\sigma\\
0&=
e^{2}_{o}\mu \lambda_{n}  
\bm{W}^{T}_{o}\cdot
\bm{\nu}+\lambda_{o}\sigma\\
0&=
e^{2}_{r}\mu \lambda_{n}\bm{W}^{T}_{r}\cdot
\bm{\nu}+\lambda_{r}\sigma\\
\end{aligned}
\end{equation}
\begin{equation}
\label{equation:friction_c}
0 \le \mu^2\lambda_{n}^2- \lambda_{t}^2/e^{2}_{t}- \lambda_{o}^2/e^{2}_{o}- \lambda_{r}^2/e^{2}_{r} \perp \sigma \ge 0
\end{equation}
where $\bm{W}^{T}_{(.)}$ are dependent on ECP for the contact patch and $\sigma$ is the magnitude of the slip velocity on the contact patch.

\subsection{Time-stepping Formulation}
We use a velocity-level formulation and an Euler time-stepping scheme to discretize the above system of equations. Let $t_u$ denote the current time and $h$ be the duration of the time step, the superscript $u$ represents the beginning of the current time and the superscript $u+1$ represents the end of the current time. Using $\dot{\bm{\nu}} \approx ( {\bm{\nu}}^{u+1} -{\bm{\nu}}^{u} )/h$, $\dot{\bm{q}} \approx( {\bm{q}}^{u+1} -{\bm{q}}^{u} )/h$ and writing forces as impulses, we get the discretized Newton-Euler equations and kinematic map:
\begin{align}
\label{equ:discrete_NE}
&0 = 
-\bm{M}^{u+1}({\bm{\nu}}^{u+1} - {\bm{\nu}}^{u})+ \bm{P}^{u+1}_c+\bm{p}^{u}_{app}+\bm{p}^{u}_{vp}\\
\label{equ:discrete_KE}
&0 =-\bm{q}^{u+1}+\bm{q}^{u}+h\bm{G}(\bm{q}^u)\bm{\nu}^{u+1}
\end{align}
where impulse $p_{(.)}=h\lambda_{(.)}$, the contact impulse $\bm{P}^{u+1}_c$ is:
\begin{equation}
\bm{P}^{u+1}_c = \bm{W}_{n}p^{u+1}+\bm{W}_{t}p^{u+1} +\bm{W}_{o}p^{u+1}_{o}+\bm{W}_{r}p^{u+1}_{r}
\end{equation}
where $\bm{W}_{t}, \bm{W}_{n}, \bm{W}_{o}, \bm{W}_{r}$ are dependent on ECPs at the end of time step $u+1$.

We discretize contact constraints (Equations~\eqref{equation:re_contact_multiple_1}$\sim$~\eqref{equation:re_contact_multiple_5}) and friction model (Equations~\eqref{equation:friction} and~\eqref{equation:friction_c}) by writing forces $\lambda_{(.)}$ into impulses $p_{(.)}$. Furthermore, the unknown contact impulses in Equations~\eqref{equation:friction}, ~\eqref{equation:friction_c} and unknown ECPs in Equations~\eqref{equation:re_contact_multiple_1}$\sim$~\eqref{equation:re_contact_multiple_5} are at the end of time step $u+1$.

\subsection{Summary of geometrically implicit time-stepping scheme}
As stated earlier, our dynamic model is composed of (a) Newton-Euler equations (Equation~\eqref{equ:discrete_NE}), (b) kinematic map between the rigid body generalized velocity and the rate of change of the parameters for representing position and orientation (Equation~\eqref{equ:discrete_KE}), (c) contact model  which gives the constraints that the equivalent contact points $\bm{a}_{1}$ and $\bm{a}_{2}$ should satisfy for ensuring no penetration between the objects (Equations~\eqref{equation:re_contact_multiple_1}$\sim$~\eqref{equation:re_contact_multiple_5}). (d) friction model  which gives the constraints that contact wrenches should satisfy (Equations~\eqref{equation:friction} and~\eqref{equation:friction_c}). Thus, we have a coupled system of algebraic and complementarity equations (mixed nonlinear complementarity problem) that we have to solve.

\begin{figure*}%
\begin{subfigure}{0.7\columnwidth}
\includegraphics[width=\columnwidth]{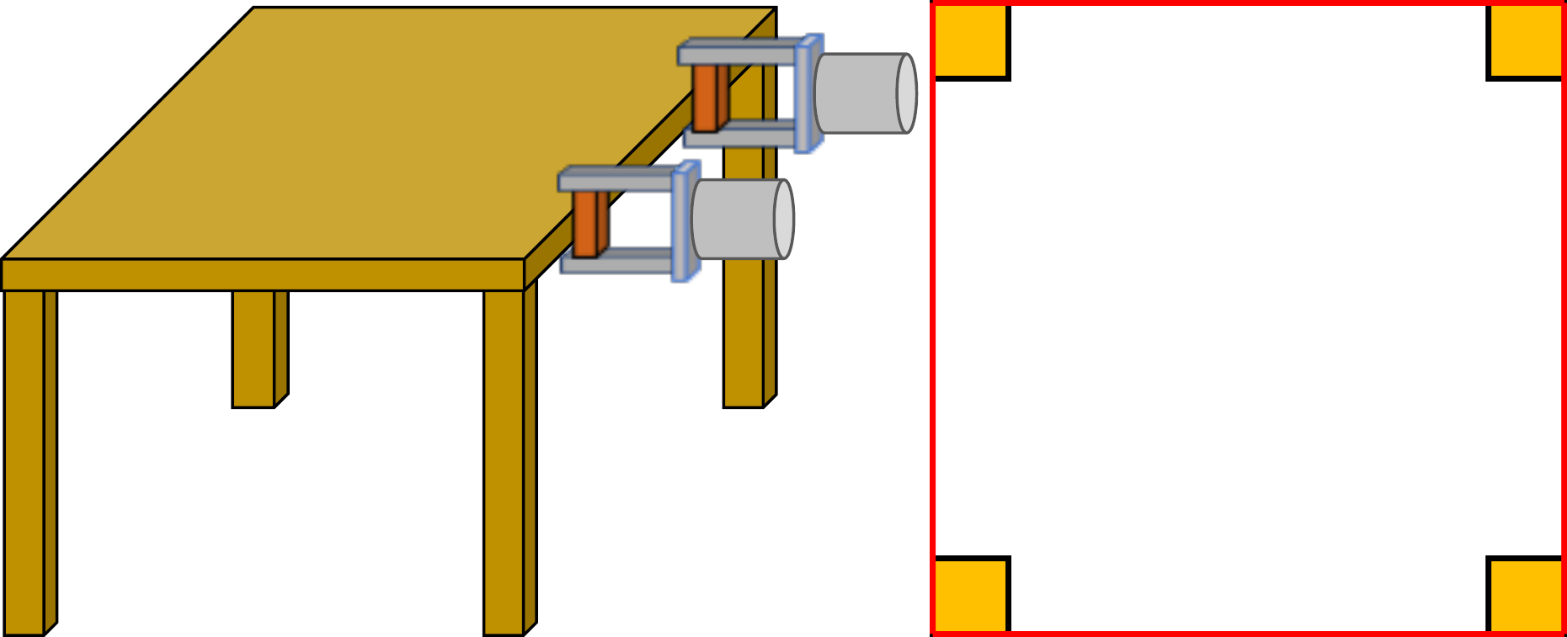}%
\caption{A four-legged desk on ground is pushed by two grippers (Left), where the contact between desk feet and ground is a union of four squares (Right). We get the convex hull for the contact patch (red square).}
\label{figure:ex1_1} 
\end{subfigure}\hfill
\begin{subfigure}{0.4\columnwidth}
\includegraphics[width=\columnwidth]{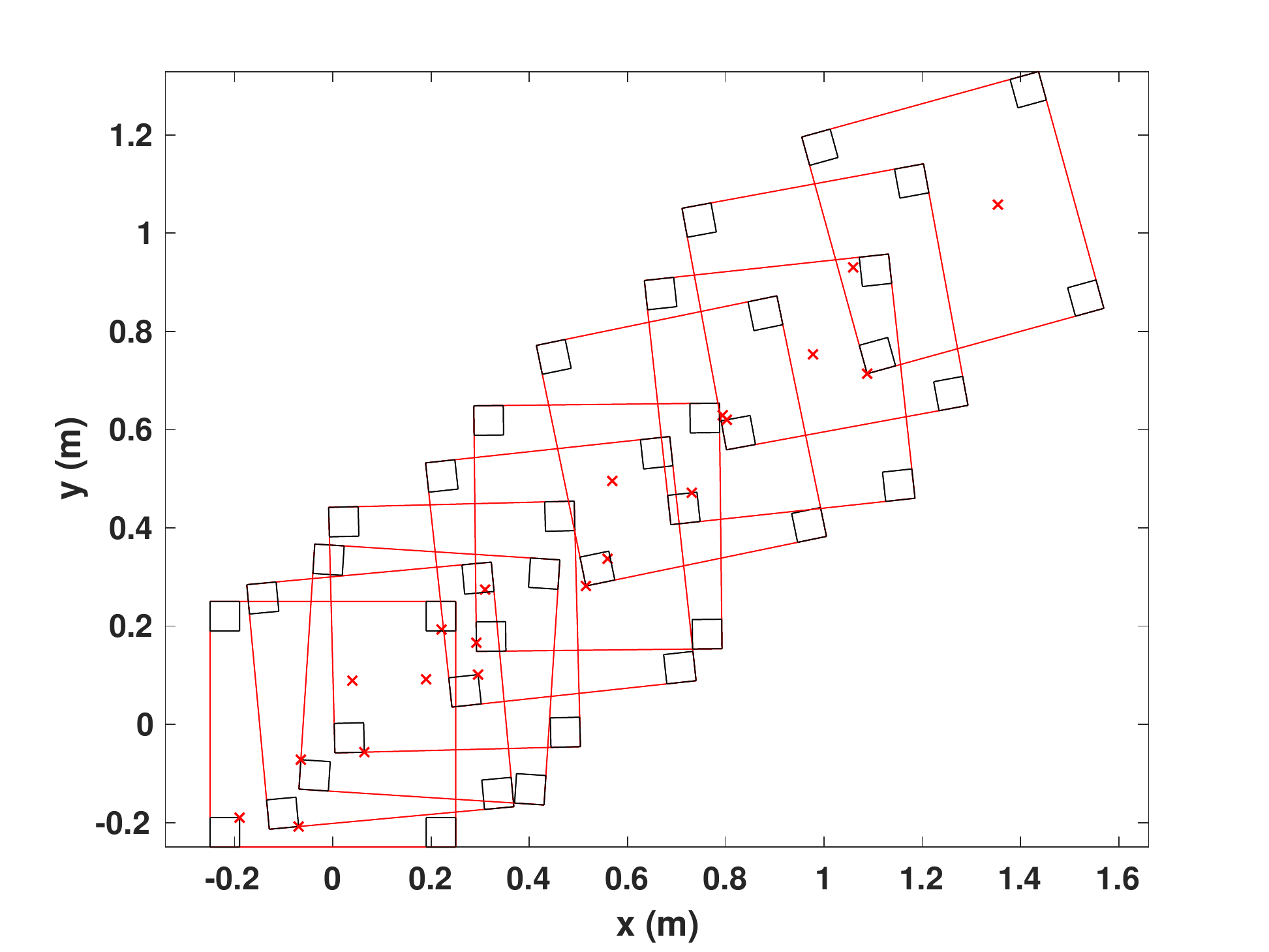}%
\caption{The snapshot for the contact patch between desk feet and ground during the motion. The ECP is shown in red dot.}
\label{figure:ex1_2} 
\end{subfigure}\hfill%
\begin{subfigure}{0.9\columnwidth}
\includegraphics[width=\columnwidth]{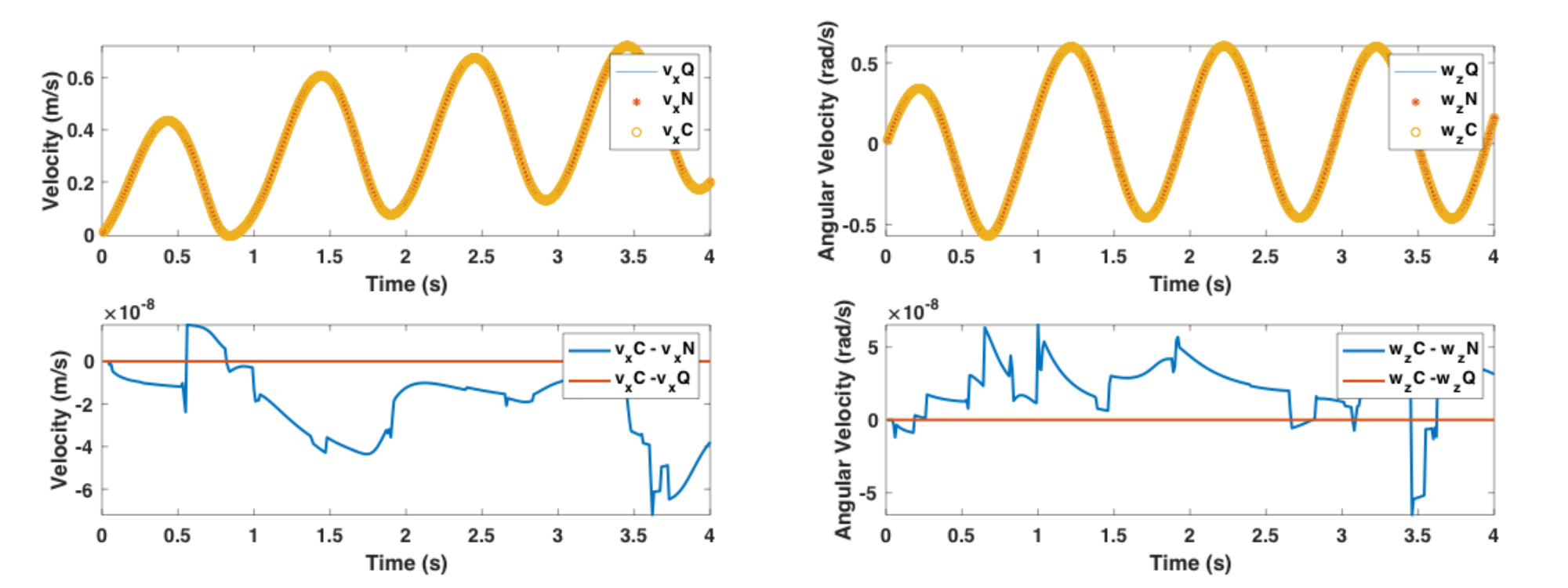}%
\caption{(Top Left) The solution of translational velocity from~\cite{XieC18b} ($v_xQ$), from~\cite{XieC18a} ($v_xN$) and proposed method ($v_xC$). (Bottom Left) The difference between $v_xC$ and $v_xN$, and difference between $v_xC$ and $v_xQ$. (Top Right) The angular velocity $w_zQ$, $w_zN$ and $w_zC$, (Bottom Right) and the differences between them.}
\label{figure:ex1_3} 
\end{subfigure}%
\caption{Comparison of the proposed method with~\cite{XieC18a,XieC18b}.  }
\label{Example1}
\end{figure*}

\begin{figure*}%
\begin{subfigure}{1\columnwidth}
\includegraphics[width=\columnwidth]{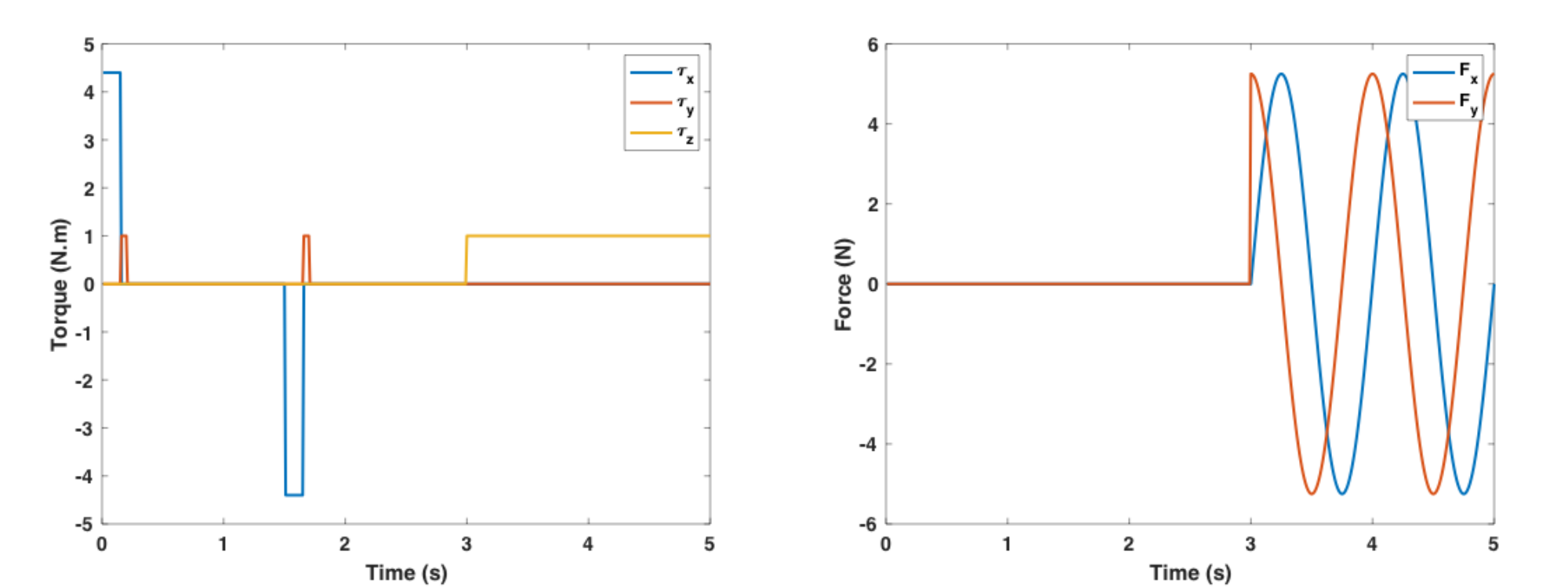}%
\caption{(Left) We plot the applied torques ($\tau_x,\tau_y,\tau_z$) from the gripper exerting on the T-shaped bar along with the time. (Right) In addition, we plot the applied force ($F_x,F_y$) acting on the bar.}
\label{figure:ex2_2} 
\end{subfigure}\hfill%
\begin{subfigure}{1\columnwidth}
\includegraphics[width=\columnwidth]{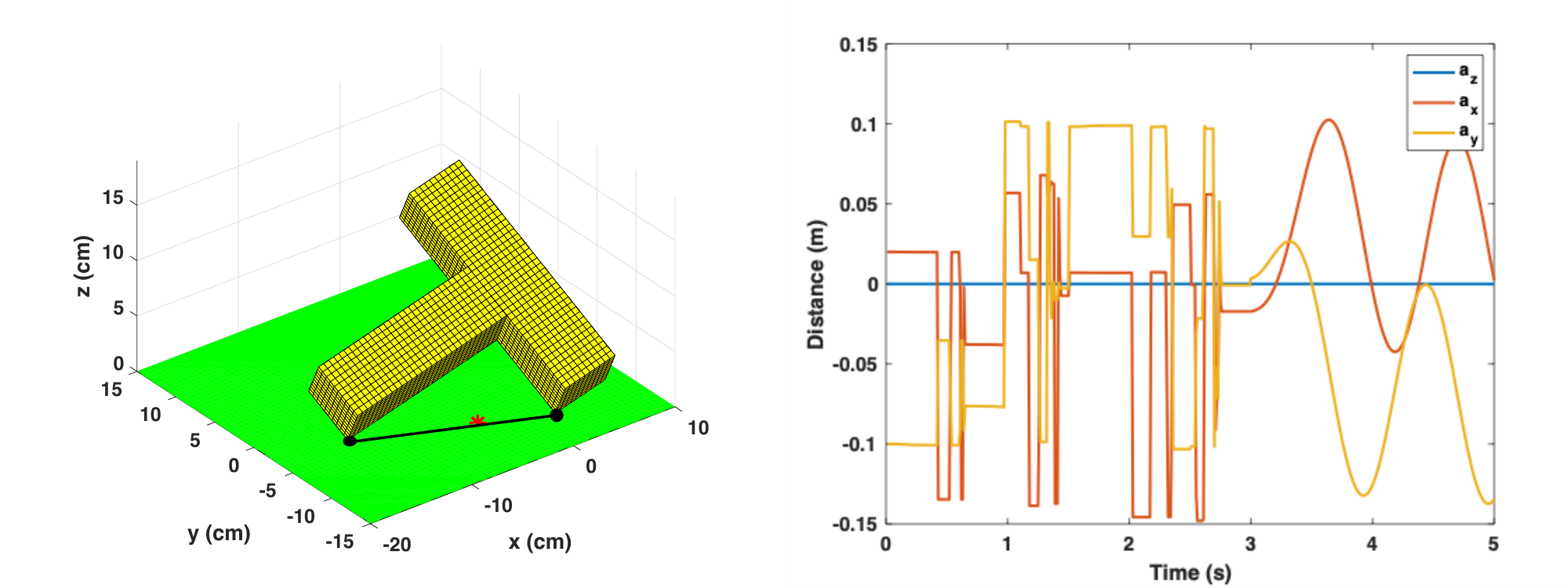}%
\caption{(Left) The two-point contact which is non-convex between the bar and ground is replaced by the convex line contact in red. (Right) The coordinates of ECP ($a_x,a_y,a_z$) during the motion.}
\label{figure:ex2_1} 
\end{subfigure}
\begin{subfigure}{0.2\textwidth}
\includegraphics[width=\textwidth]{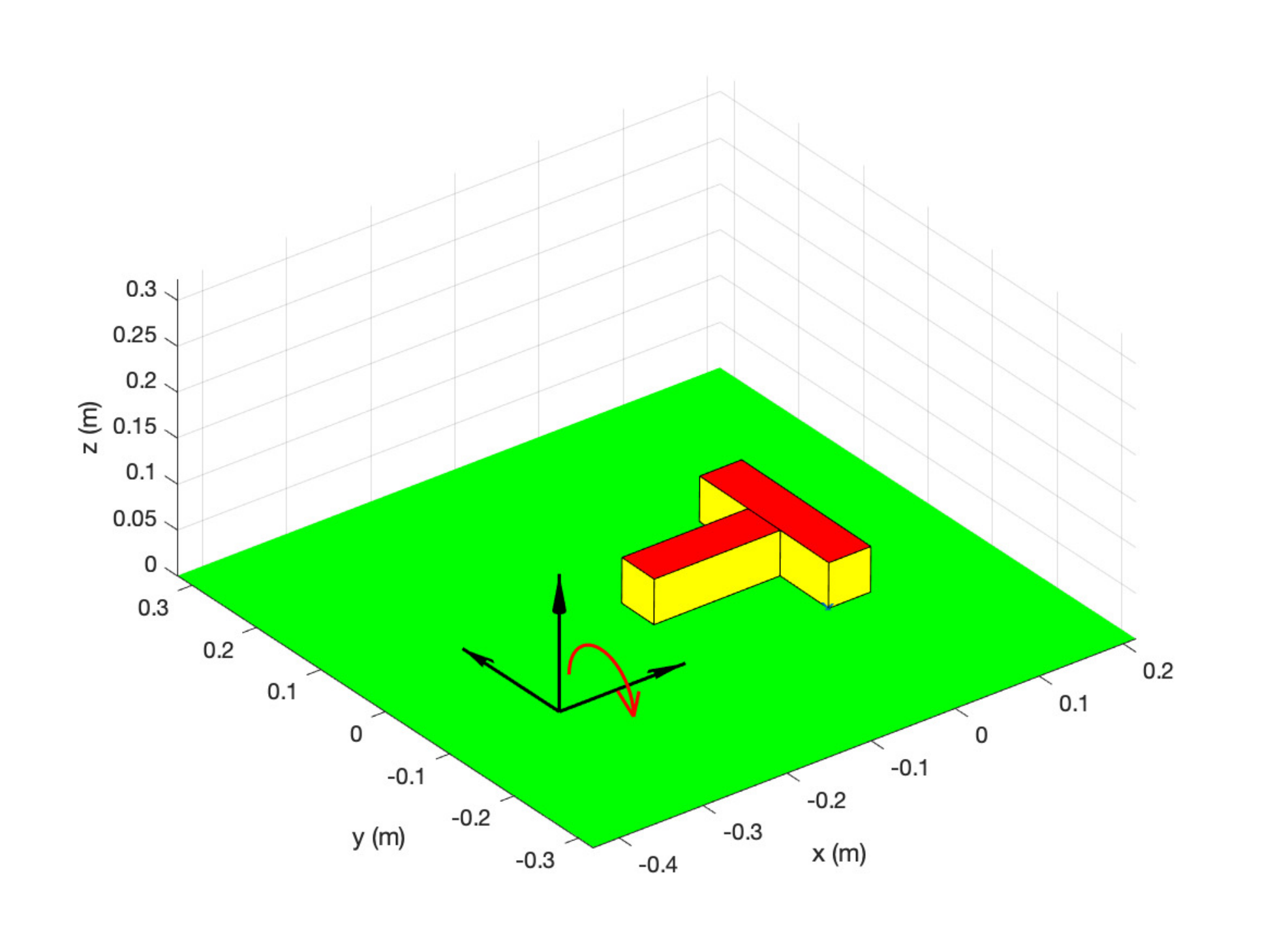}%
\caption{The snapshot: t = 0.01s  }
\label{figure:ex2_3} 
\end{subfigure}\hfill%
\begin{subfigure}{0.2\textwidth}
\includegraphics[width=\textwidth]{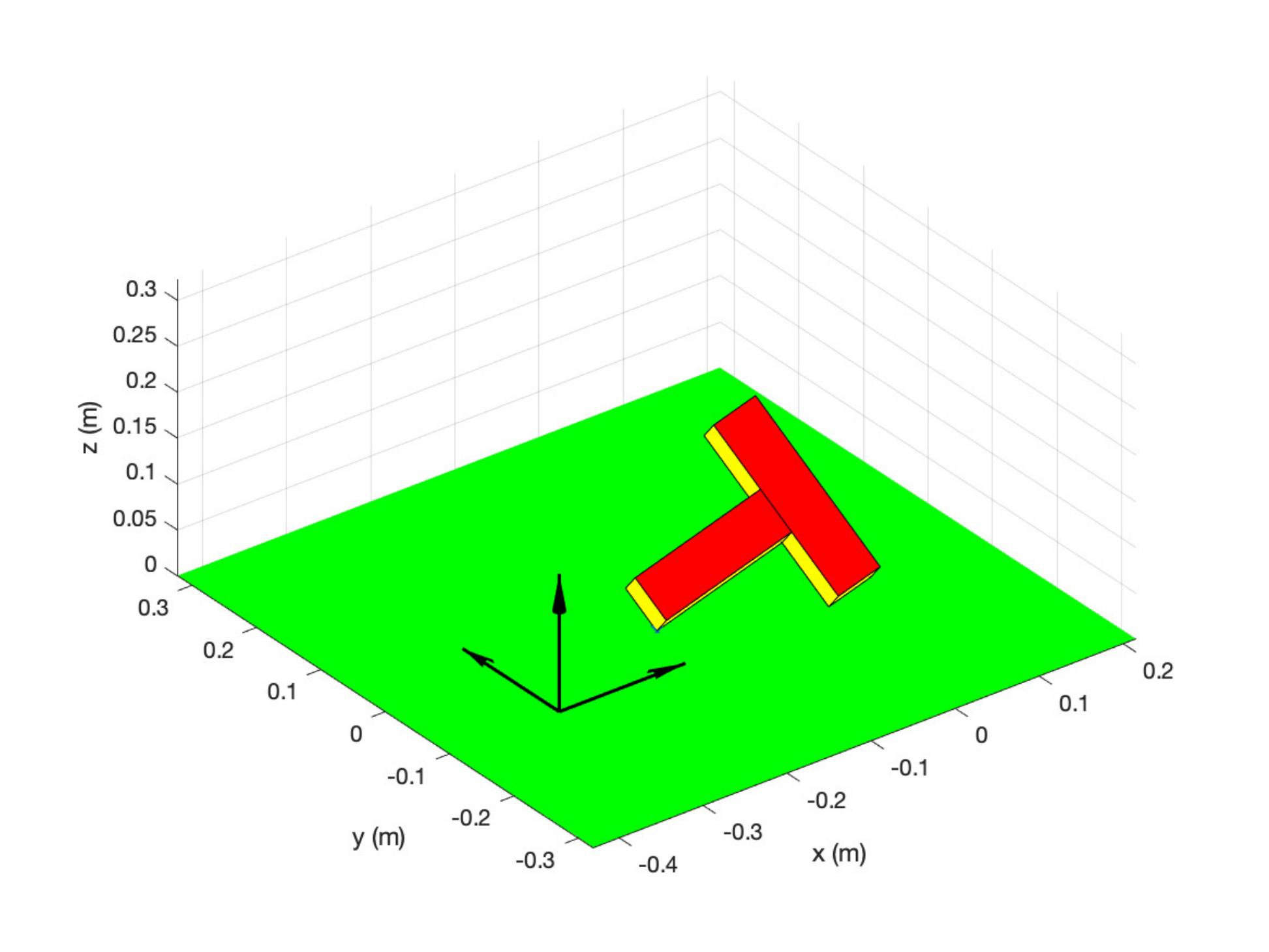}%
\caption{The snapshot: t = 0.51s }
\label{figure:ex2_4} 
\end{subfigure}\hfill%
\begin{subfigure}{0.2\textwidth}
\includegraphics[width=\textwidth]{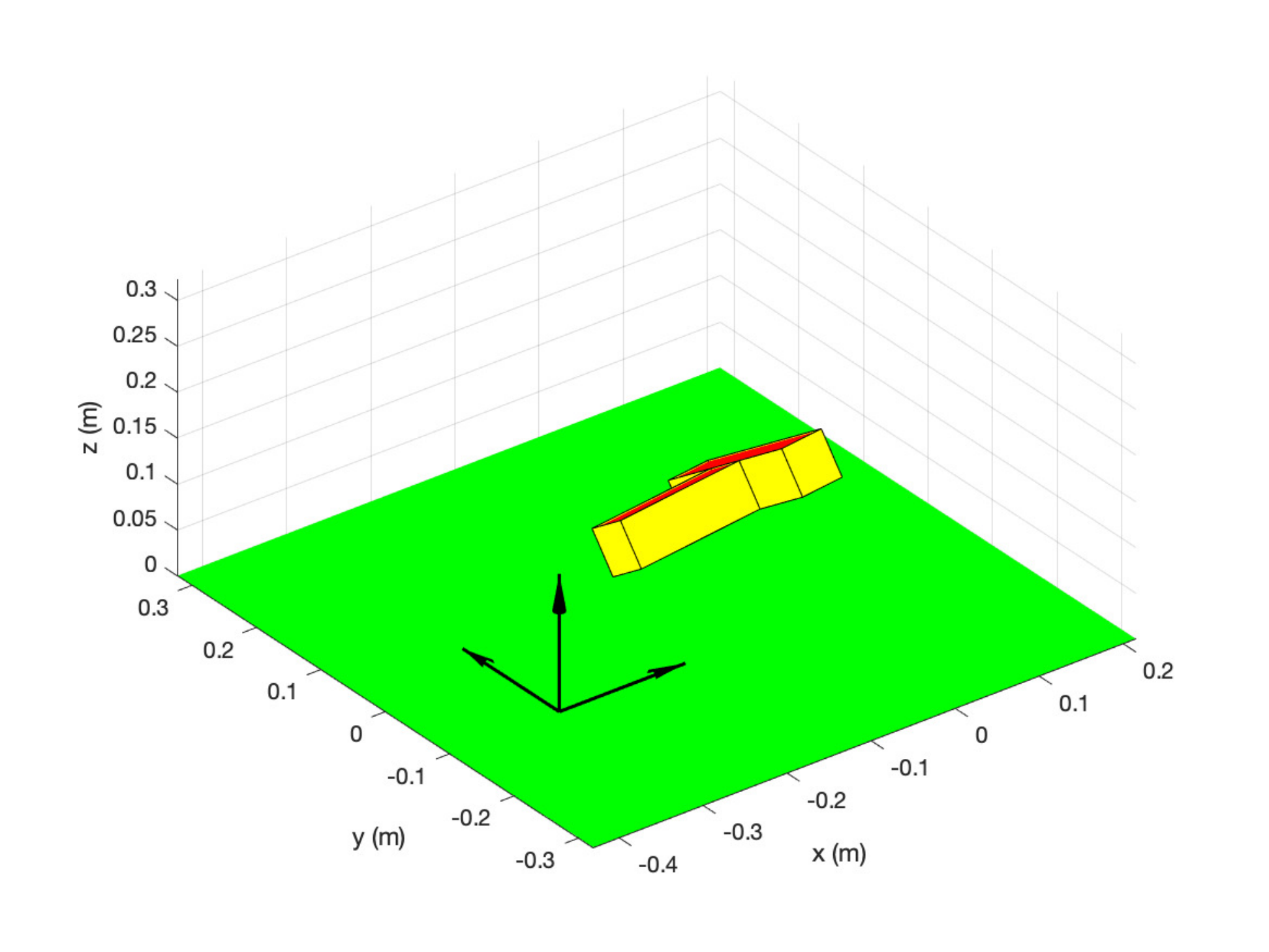}%
\caption{The snapshot: t = 1.76s }
\label{figure:ex2_5} 
\end{subfigure}\hfill%
\begin{subfigure}{0.2\textwidth}
\includegraphics[width=\textwidth]{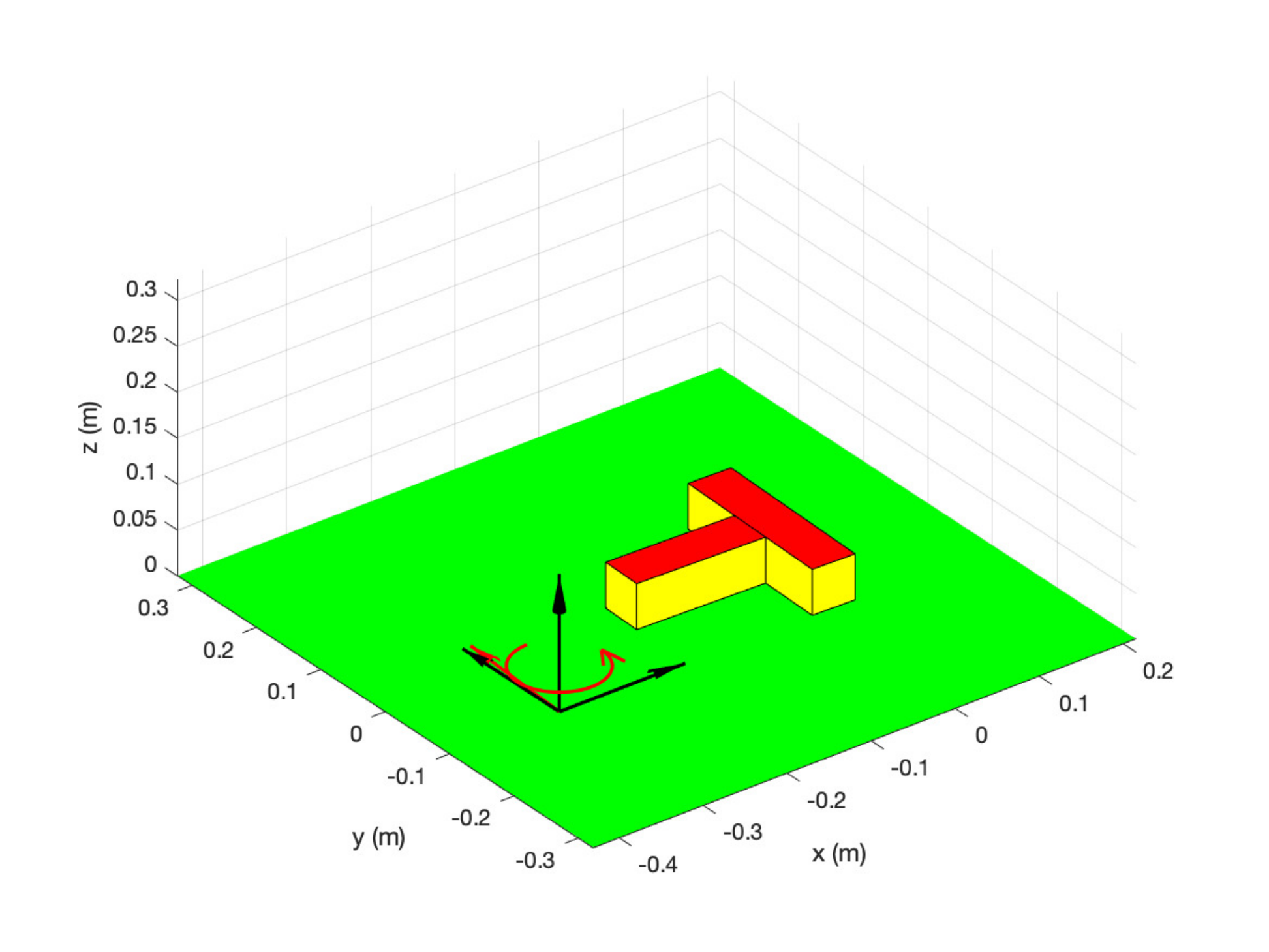}%
\caption{The snapshot: t = 3.01s }
\label{figure:ex2_6} 
\end{subfigure}\hfill%
\begin{subfigure}{0.2\textwidth}
\includegraphics[width=\textwidth]{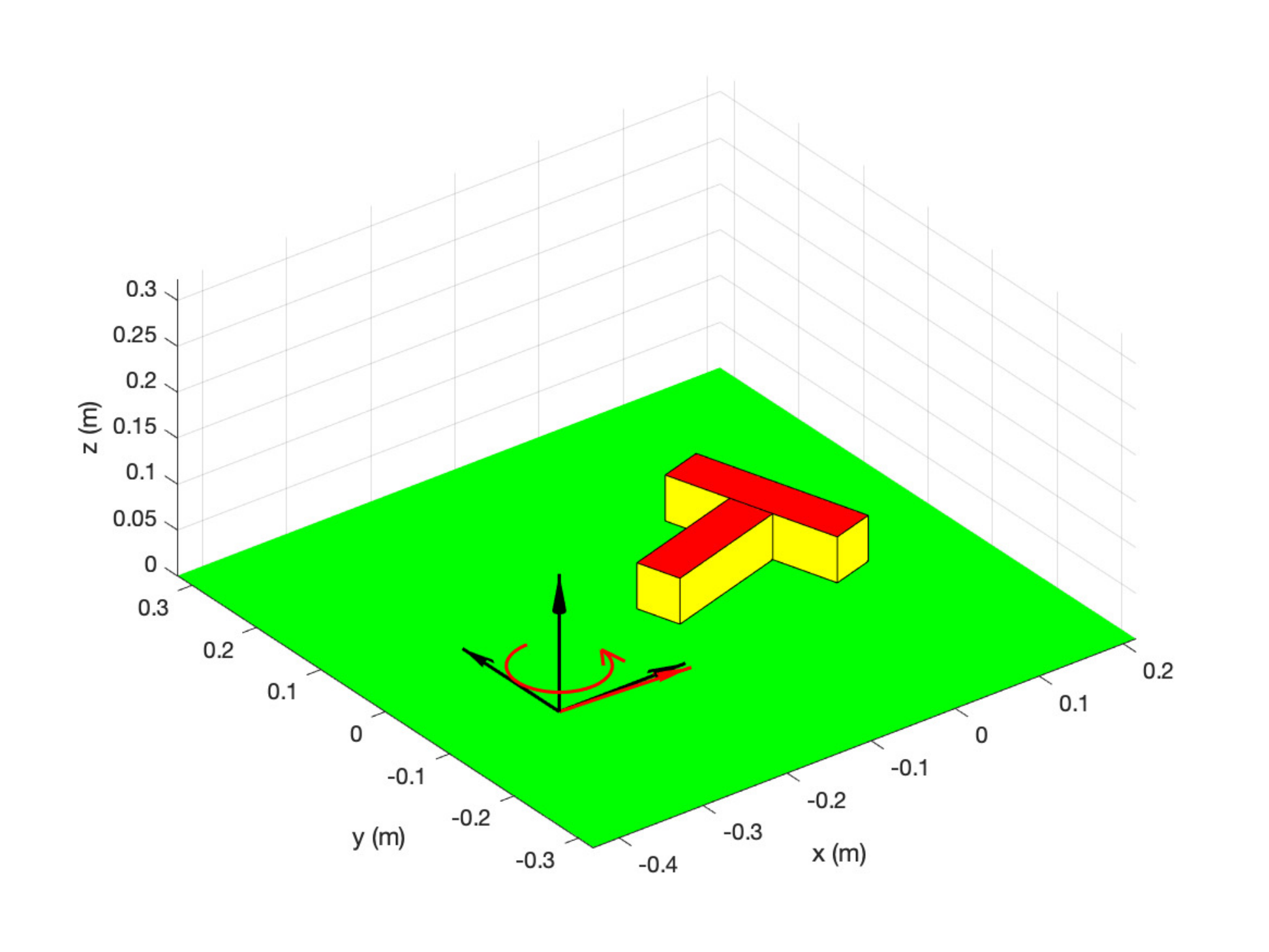}%
\caption{The snapshot: t = 3.26s }
\label{figure:ex2_7} 
\end{subfigure}\hfill%
\caption{Simulation for the motion of T-shaped bar example based on the proposed method.  }
\label{Example2}
\end{figure*}

\section{SIMULATION RESULTS}
In this section, we evaluate the performance of our proposed method on two example problems. All the simulations are run in MATLAB on a MacBook Pro with 2.6 GHZ processor and 16 GB RAM.  
\subsection{Comparison with existing methods}
We first consider the problem of predicting motion of a square desk with four legs pushed by two grippers, where the contact patch between desk feet and support is a union of four small squares (see Figure~\ref{figure:ex1_1}). Such problems are useful for robots navigating or rearranging furniture in domestic environments. The dimension for the square desk is length $L = 0.5m$, the length for each small square is $L_s = 0.06m$ and height of desk's CM is $H = 0.45m$. The mass of desk is $m = 15kg$ and the gravity's acceleration is $g = 9.8m/s^2$.

We compare the convex hull-based contact detection method presented in this paper with method in~\cite{XieC18a} and~\cite{XieC18b}. Note that we apply forces, so as to ensure sliding without toppling. Therefore, the dynamic model for sliding motion that we proposed in~\cite{XieC18b} can be applied here. In~\cite{XieC18b}, we have shown that for sliding-only motion, the discrete-time equations of motion can be reduced to a system of four quadratic equations. Since the contact is a union of four disjoint squares, we can also use the method in~\cite{XieC18a}, where, we consider each non-penetration constraint between each contact patch and the ground separately.   

The time step chosen for all the simulations is $h = 0.01s$ and simulation time is $4s$. The coefficient of friction between desk and support is $\mu=0.22$ and the given constants for friction ellipsoid are $e_t = e_o = 1, e_r = 0.1m$. As shown in Figure~\ref{figure:ex1_1}, the desk slides on the support. The initial position of CM is $q_x = q_y = 0m$, $q_z = 0.45m$ and orientation about normal axis is $\theta_z = 0$. The initial velocity is $v_x = 0.3m/s$, $v_y = 0.2m/s$, $w_z = 0.5 rad/s$. The external forces and moments from grippers exerted on the desk is periodic, $f_x = 22.5\sin(2\pi t)+ 22.5 \ N$,$f_y = 22.5\cos(2\pi t) + 22.5 \ N$, $\tau_z = 2.1\cos(2\pi t) \ Nm$, where $t \in [0,4]s$.

In Figure~\ref{figure:ex1_2}, we plot the snapshot for the contact patch during the motion. It can be seen from the figure, that the table translates as well as rotates during motion. The ECP is marked by a red cross and it can be seen that the ECP is not within the contact patch and it is also not below the center of mass of the table (which matches the intuition, since the table is rotating).  In Figure~\ref{figure:ex1_3}, we plot the velocity of the desk ($v_x$ and $w_z$) during the motion. In addition, we plot the difference between solutions of quadratic model and convex hull method, and difference between solutions of MNCP model and convex hull method. As Figure~\ref{figure:ex1_3} illustrates, the differences for $v_x$ and $w_z$ are within 1e-8, which validates the accuracy of convex hull method.

Furthermore, the average time the model in~\cite{XieC18b} spends for each time step is $0.0022s$, the time our proposed method method spends is $0.0053s$ (which is $2.4$ times than~\cite{XieC18b}), and the time the model in~\cite{XieC18a} spends is $0.0487s$ (which is more than $22$ times than quadratic model's and $9$ times than convex hull method). To summarize, proposed method simplify the model in~\cite{XieC18a} greatly by modeling multiple contact patches with a single patch and therefore is much more efficient. The model in~\cite{XieC18b}, although faster is valid only for sliding and cannot be applied to situations where the object may topple.
\subsection{Simulations of the T-shaped bar}
This example is used to illustrate that our method allows objects to automatically transition between different contact modes (surface, point, line and also making and breaking of contact), while ensuring the objects do not penetrate. As Figure~\ref{figure_Motivation} illustrates, the planar contact patch between T-shaped bar and the support is non-convex. 

The dimensions of the bar are given in Figure~\ref{figure_Motivation}. The mass of the bar is $2kg$, the other parameters like gravity and friction parameters are the same as in the first example. The time step chosen is $h = 0.01s$ and the total simulation time is $t = 5s$. Figure~\ref{figure:ex2_2} shows the applied forces and moments from the gripper acting on the bar, and Figure~\ref{figure:ex2_1} demonstrates the coordinates of ECP (i.e., $a_x,a_y,a_z$). Note that the coordinate of ECP along z axis $a_z$ stays zero within the numerical tolerance of $1e^{-12}$ during the motion. Thus, there is no penetration between the bar and ground. The snapshots show the transition of the bar from surface contact~\ref{figure:ex2_3} to two-point contact~\ref{figure:ex2_4} to another two-point contact with different pair of contact points~\ref{figure:ex2_5} to a surface contact~\ref{figure:ex2_6} and then rotation while having the surface contact. All these transitions were automatically detected by our algorithm.

\section{Conclusion}
In this paper we present a geometrically implicit time-stepping method for solving dynamic simulation problems with planar non-convex contact patches. In our model, we use a convex hull of the non-convex object and combine the collision detection with numerical integration of equations of motion. This allows us to solve for an equivalent contact point (ECP) in the convex hull of the non-convex contact patch as well as the contact wrenches simultaneously. We prove that although we model the contact patch with an ECP, the non-penetration constraints at the end of the time-step are always satisfied. We present numerical simulation motion prediction for two example problems that are representative of applications in robotic manipulation. The results demonstrate  that our method can automatically transition among different contact modes (non-convex contact patch, point, and line). 






\bibliographystyle{IEEEtran}
\bibliography{main}


\end{document}